\setlist[enumerate]{topsep=0pt,itemsep=-1ex,partopsep=1ex,parsep=1ex}
\setlist[itemize]{topsep=0pt,itemsep=-1ex,partopsep=1ex,parsep=1ex}
\definecolor{gg}{RGB}{15,150,15}
\definecolor{rr}{RGB}{230,45,45}
\def\maketag@@@#1{\hbox{\m@th\normalfont\normalsize#1}}
\newcommand{\mset}[1]{\left\{\kern-.5em\left\{ #1 \right\}\kern-.5em\right\}}
\newcommand{\mmset}[1]{\{\kern-.4em\{ #1 \}\kern-.4em\}}
\newcommand{\norm}[1]{\left\Vert#1\right\Vert}
\newcommand{\abs}[1]{\left\vert#1\right\vert}
\newcommand{\set}[1]{\left\{#1\right\}}
\newcommand{\parr}[1]{\left (#1\right )}
\newcommand{\brac}[1]{\left [#1\right ]}
\newcommand{\Real}{\mathbb R}
\newcommand{\eps}{\varepsilon}
\newcommand{\too}{\rightarrow}
\newcommand{\dtoo}{\downarrow}
 \newtheorem{theorem}{Theorem}
 \newtheorem{definition}{Definition}
\def\eqref#1{equation~\ref{#1}}
\def\1{\bm{1}}
\def\eps{{\epsilon}}
\def\vec1{{\bm{1}}}
\DeclareMathAlphabet{\mathsfit}{\encodingdefault}{\sfdefault}{m}{sl}
\SetMathAlphabet{\mathsfit}{bold}{\encodingdefault}{\sfdefault}{bx}{n}
\def\gC{{\mathcal{C}}}
\def\gM{{\mathcal{M}}}
\def\gP{{\mathcal{P}}}
\def\gS{{\mathcal{S}}}
\def\gT{{\mathcal{T}}}
\def\gU{{\mathcal{U}}}
\newcommand{\E}{\mathbb{E}}
\newcommand{\R}{\mathbb{R}}
\DeclareMathOperator*{\argmin}{arg\,min}
\definecolor{linkcolor}{RGB}{74, 102, 146}
\DeclareRobustCommand\onedot{\futurelet\@let@token\@onedot}
\def\@onedot{\ifx\@let@token.\else.\null\fi\xspace}
\newcommand{\eg}{\emph{e.g}\onedot}
\newcommand{\ie}{\emph{i.e}\onedot}
\Crefname{section}{Sect.}{Sects.}
\Crefname{appendix}{App.}{Apps.}
\Crefname{proposition}{Prop.}{Props.}
\icmltitlerunning{Riemannian Convex Potential Maps}
\begin{document}

\twocolumn[
\icmltitle{Riemannian Convex Potential Maps}

\icmlsetsymbol{equal}{*}

\begin{icmlauthorlist}
\icmlauthor{Samuel Cohen}{equal,ucl}
\icmlauthor{Brandon Amos}{equal,fb}
\icmlauthor{Yaron Lipman}{fb,weizmann}
\end{icmlauthorlist}

\icmlaffiliation{fb}{Facebook AI Research}
\icmlaffiliation{ucl}{University College London}
\icmlaffiliation{weizmann}{Weizmann Institute of Science}
\icmlcorrespondingauthor{Samuel Cohen}{samuel.cohen.19@ucl.ac.uk}
\icmlcorrespondingauthor{Brandon Amos}{brandon.amos.cs@gmail.com}
\icmlcorrespondingauthor{Yaron Lipman}{ylipman@fb.com,yaron.lipman@weizmann.ac.il}

\icmlkeywords{optimal transport, generative modeling,
  Riemannian geometry, convex optimization}

\vskip 0.3in
]

\printAffiliationsAndNotice{\icmlEqualContribution}

\begin{abstract}
  Modeling distributions on Riemannian manifolds is a
  crucial component in understanding non-Euclidean data that
  arises, \eg, in physics and geology.
  The budding approaches in this space are limited by
  representational and computational tradeoffs.
  We propose and study a class of flows that uses
  convex potentials from Riemannian optimal transport.
  These  are universal and can model distributions on
  any compact Riemannian manifold without requiring domain knowledge
  of the manifold to be integrated into the architecture.
  We demonstrate that these flows can model standard
  distributions on spheres, and tori, on synthetic and geological data.
  Our source code is freely available online at
  \href{http://github.com/facebookresearch/rcpm}{github.com/facebookresearch/rcpm}.
\end{abstract}

\section{Introduction}
Today's generative models have had wide-ranging successes
of modeling non-trivial probability distributions that
naturally arise in fields such as
physics \citep{Khler2019EquivariantFS,rezendehamflows},
climate science \cite{mathieu2020riemannian},
and reinforcement learning \cite{flowrl}.
Generative modeling on ``straight'' spaces (\ie, Euclidean) are pretty
well-developed and include (continuous) normalizing flows
\citep{varinfflows, dinh2016density, chenneuralodes},
generative adversarial networks
\citep{gangoodfellow},
and variational auto-encoders
\citep{vaekingma,rezendevae}.

In many applications however, data resides on spaces with more
complicated structure, \eg, Riemannian manifolds such as
spheres, tori, and cylinders. Using Euclidean generative models on
this data is problematic from two aspects: first, Euclidean
models will allocate mass in ``infeasible'' areas of the space; and
second, Euclidean models will often need to squeeze mass in zero
volume subspaces. Moreover, knowledge of the space geometry can improve the
learning process by incorporating an efficient geometric inductive bias
as part of the modeling and learning pipeline.

Flow-based generative models are the state-of-the-art in
Euclidean settings and are starting to be extended
to Riemannian manifolds \citep{rezende2020normalizing,mathieu2020riemannian,lou2020neural}.
However, in contrast with some models in the Euclidean case
\cite{expressivepowflows, huang2020convex}, the representational
capacity and universality of these models is not well-understood.
Some of these approaches are efficiently tailored to specific
choices of manifolds, but the methods and theory of flows on
general Riemannian manifolds are not well-understood.

In this paper we introduce the Riemannian Convex Potential Map (RCPM),
a generic model for generative modeling on arbitrary
Riemannian manifolds that enjoys universal representational power.
RCPM (illustrated in \cref{fig:demo}) is based on Optimal Transport (OT) over Riemannian
manifolds \cite{mccann2001polar, villani2008optimal, sei2013jacobian, rezende2020normalizing}
and generalizes the convex potential flows in the Euclidean
setting by \citet{huang2020convex}.
We prove that RCPMs are universal  on \emph{any} compact
Riemannian manifold, which comes from the fact that our discrete
$c$-concave potential functions are universal.
Our experimental demonstrations show that RCPMs are competitive
and model standard distributions on spheres and tori.
We further show a case study in modeling continental drift where
we transport Earth's land mass on the sphere.

\begin{figure}
    \centering
    \includegraphics[width=0.995\columnwidth]{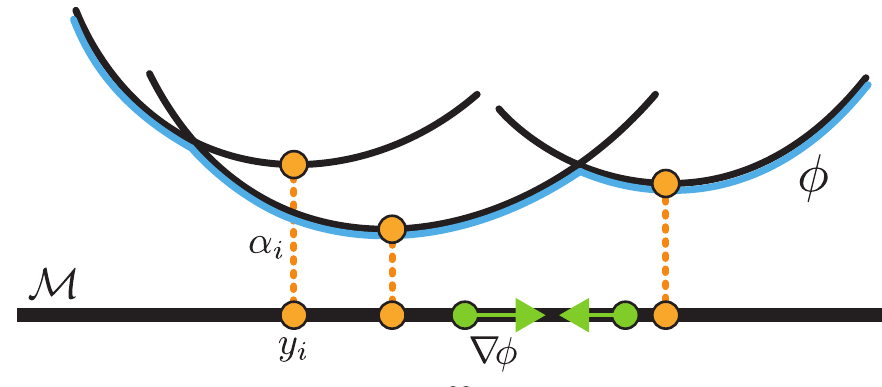}
    \caption{Illustration of a discrete $c$-concave
      function $\phi$ (blue) over a base manifold $\gM$ (bold line).
      These consist of discrete components $\{\alpha_i, y_i\}$ and
      have a Riemannian gradient $\nabla \phi\in T_x\gM$.
    }
    \label{fig:discrete_c_concave}
\end{figure}

\begin{figure*}[t]
    \centering
    \hspace*{-40mm}
    \includegraphics[width=0.248\textwidth]{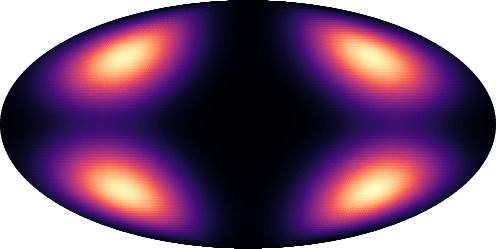}
    \hspace*{-1.75mm}
    \includegraphics[width=0.248\textwidth]{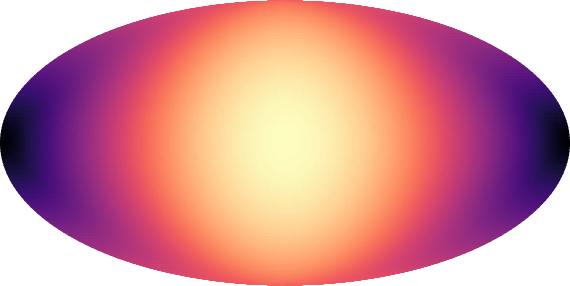}
    \hspace*{-1.5mm}
    \includegraphics[width=0.24\textwidth]{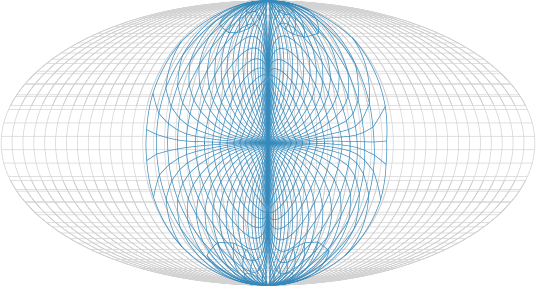}
    \hspace*{-1.5mm}
    \includegraphics[width=0.248\textwidth]{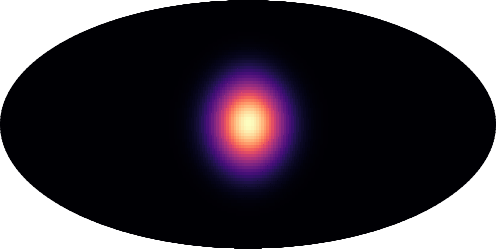}
    \hspace*{-40mm}
    \caption{Illustration of a Riemannian convex potential map on a sphere.
      From left to right:
      1) base distribution $\mu$ of a mixture of wrapped Gaussians,
      2) learned $c$-convex potential,
      3) mesh grid distorted by the exponential map of the
         Riemannian gradient of the potential,
      4) transformed distribution $\nu$.
    }
    \label{fig:demo}
\end{figure*}

\section{Related Work}
\paragraph{Euclidean potential flows.}
Most related to our work, is the work by \citet{huang2020convex} that
leveraged Euclidean optimal transport, parameterized using input
convex neural networks (ICNNs) \cite{amos2017input} to construct universal
normalizing flows on Euclidean spaces. Similarly, \citet{w2gn,oticnn} compute optimal transport maps via ICNNs. Riemannian optimal transport replaces
the standard Euclidean convex functions with so-called $c$-convex or
$c$-concave functions, and the Euclidean translation by exponential
map. Unfortunately, the notion of $c$-convex or $c$-concave functions
is intricate and a simple characterization of such functions is not
known.
Our approach is to approximate arbitrary $c$-concave functions on
general Riemannian manifolds using \emph{discrete $c$-concave
functions} that are simply the minimum of a finite number of
translated squared intrinsic distance functions, see
\cref{fig:discrete_c_concave}. Intuitively, this construction resembles
the approximation of a Euclidean concave function as the minimum of a
finite collection of affine tangents. Although simple, we prove that
\emph{discrete $c$-concave functions} are in fact \emph{dense} in the
space of $c$-concave functions and therefore replacing general
$c$-concave functions with discrete $c$-concave functions leads to a
universal Riemannian OT model. Related, \citet{gangbo1996} considered OT maps
of discrete measures which are defined via discrete $c$-concave functions. %

\paragraph{Exponential map flows.}
\citet{sei2013jacobian, rezende2020normalizing} propose distinct
parameterizations for $c$-convex functions living on the sphere
specifically. The latter applies it to training flows on the sphere
using the construction from McCann's theorem.  Our work can be seen as
a generalization of the exponential-map approach in
\citet{rezende2020normalizing} to arbitrary Riemannian
manifolds. In contrast to this work,
the maps from
our discrete $c$-concave layers are universal.

\paragraph{Other Riemannian flows.}
\citet{mathieu2020riemannian,lou2020neural} propose extensions of
continuous normalizing flows to the Riemannian manifold setting. These
are flexible with respect to the choice of manifold, but their
representational capacity is not well-understood and solving ODEs on
manifolds can be expensive.
In parallel, \citet{brehmer2020flows} proposed a method for
simultaneously learning the manifold data lives on and a normalizing
flow on the learned
manifold.
\citet{bose2020latent} consider hyperbolic normalizing flows.

\paragraph{Optimal transport on Riemannian manifolds.}
Optimal transport on spherical manifolds has been extensively studied from theoretical standpoints. \citet{figalli2009continuity, regularityotsp,kim2012towards} study the regularity (continuity, smoothness) of transport maps on spheres and other non-negatively curved manifolds. Regularity and smoothness are more intricate on negatively curved manifolds, \eg hyperbolic spaces. Nevertheless, several works demonstrated that transport can be made smooth through a minor change to the Riemannian cost  \citep{smoothhyp}.
\citet{othypalv, aligninghyp} leverage this to learn transport
maps on hyperbolic spaces, in which case maps are parameterized as
hyperbolic neural networks.  %

\section{Background}
In this section, we introduce the relevant background on normalizing flows and Riemannian optimal transport theory.

\subsection{Normalizing flows}
Normalizing flows parameterize probability distributions $\nu\in\gP(\gM)$, on a manifold $\gM$, by pushing a simple base (prior) distribution $\mu \in \gP(\gM)$ through a diffeomorphism\footnote{A diffeomorphism is a differentiable bijective mapping with a differentiable inverse.} $s:\gM\too\gM$.

In turn, sampling from distribution $\nu$ amounts to transforming samples $x$ taken from the base distribution via $s$:
\begin{align}
   y = s(x) \sim \nu, \quad \text{where } x \sim \mu.
\end{align}
In the language of measures, $\nu$ is the push-forward of the base measure $\mu$ through the transformation $s$, denoted by $\nu = s_{\#}\mu$. If densities exist, then they adhere the change of variables formula
\begin{align}
  \nu(y) = \mu(x)| \det J_s(x)|^{-1},
\end{align}
where we slightly abuse notation by denoting the densities again as $\mu,\nu$.
In practice, a normalizing flow $s$ is often defined as a composition of simpler, primitive diffeomorphisms $s_1,\ldots,s_T:\gM\too\gM$, \ie,
\begin{equation}\label{e:s_as_comp}
    s=s_T\circ\cdots\circ s_1.
\end{equation}
For a more substantial review of computational and representational trade-offs inherent to this class of model on Euclidean spaces, we refer to \citet{flowreview}.

\subsection{$c$-convexity and concavity}
\label{ss:c-convexity}
Let $(\gM, g)$ be a smooth compact Riemannian manifold without boundary, and $c(x,y)=\frac{1}{2}d(x,y)^2$, where $d(x,y)$ is the intrinsic distance function on the manifold. We use the following generalizations of convex and concave functions:
\begin{definition}
  A function $\phi:\gM\too \Real\cup\set{+\infty}$ is $c$-convex if it is not identically $+\infty$ and there exists $\psi:\gM\too\Real\cup\set{\pm \infty}$ such that
  \begin{equation}\label{e:c_convex}
      \phi(x)=\sup_{y\in\gM} \parr{-c(x,y)+\psi(y)}
  \end{equation}
\end{definition}
\begin{definition}\label{def:c_concave}
  A function $\phi:\gM\too \Real\cup\set{-\infty}$ is $c$-concave if it is not identically $-\infty$ and there exists $\psi:\gM\too\Real\cup\set{\pm \infty}$ such that
  \begin{equation}\label{e:c_concave}
      \phi(x)=\inf_{y\in\gM} \parr{c(x,y)+\psi(y)}
  \end{equation}
\end{definition}
We denote the space of $c$-concave functions on $\gM$ as $\widehat{\gC} (\gM)$.  We also note that if $\psi$ is $c$-concave, $-\psi$ is $c$-convex, hence c-concavity results can be directly extended into c-convexity results by negation.
We also use the $c$-infimal convolution:
\begin{equation}\label{e:c_infimal}
  \psi^c(y) = \inf_{x\in\gM} \parr{ c(x,y) - \psi(x)}.
\end{equation}
$c$-concave functions $\phi$ satisfy the involution property:
\begin{equation}\label{e:involution}
    \phi^{cc}=\phi.
\end{equation}

When $\gM$ is a product of spheres or a Euclidean space, (\eg, spheres, tori),  $\widehat{\gC}(\gM)$ is a convex space \cite{regprodsphere, otcurvature} where a convex combinations of $c$-concave functions are $c$-concave. In the case $\gM = \Real^d$ and $c(x,y) = -x^Ty$, Euclidean concavity is recovered.

\subsection{Riemannian Optimal Transport}
\label{ss:mccan}
Optimal transport deals with finding efficient ways to push a base probability measure $\mu\in \gP(\gM)$ to a target measure $\nu\in\gP(\gM)$, \ie, $s_\#\mu=\nu$. Often $s$ considered is more general than a diffeormorphism, namely a transport plan which is a bi-measure on $\gM\times \gM$.

When $\gM$ is a smooth compact manifold with no boundary, $\mu,\nu\in \gP(\gM)$, and $\mu$ has density (\ie, is absolutely continuous w.r.t.~the volume measure of $\gM$), Theorem 9 of \citet{mccann2001polar} shows that there is a unique (up-to $\mu$-zero sets) transport map $t:\gM\too\gM$ that pushes $\mu$ to $\nu$, \ie, $s_\#\mu=\nu$, and minimizes the transport cost
\begin{align}C(s)=\int_\gM c(x,s(x)) d\mu(x).\end{align} Furthermore, this OT map is given by
\begin{equation}\label{e:ot}
    t(x)=\exp_x \brac{-\nabla \phi(x)},
\end{equation}
where $\phi$ is a $c$-concave function, $\exp$ is the Riemannian exponential map, and $\nabla$ is the Riemannian gradient. Note that equivalently  $t(x)=\exp_x\brac{\nabla\psi(x)}$ for $c$-convex $\psi$.

As a consequence, there always exists a (Borel) mapping $t:\gM \rightarrow \gM$ such that $t_{\#}\mu = \nu$ where $t$ is of the form of \cref{e:ot}. The issue of regularity and smoothness of OT maps is a delicate one and has been  extensively studied (see, \eg, \citet{villani2008optimal, regprodsphere, otcurvature}); in general, OT maps are not smooth, but can be seen as a natural generalization to normalizing flows, relaxing the smoothness of $s$. Henceforth, we will call OT maps ``flows.'' In fact, our discrete $c$-concave functions, the gradient of which are shown to approximate general OT maps, define piecewise smooth maps. 

Constant-speed geodesics $\eta: [0,1] \rightarrow \gM$ between a sample $x$ (from $\mu$) and $t(x)$ can also be recovered $\mu$-almost everywhere on the manifold \citep{otcurvature} as
\begin{equation}\label{e:otgeodesics}
    \eta(l)=\exp_x \brac{-l\nabla \phi(x)}.
\end{equation} 
For a geodesic starting at $x_0 \in \gM$, $\eta(0) = x_0$ and $\eta(1) = t(x_0)$.

\section{Riemannian Convex Potential Maps\footnote{
    As mentioned in \cref{ss:c-convexity,ss:mccan},
    both $c$-convex and $c$-concave can
    be used; we follow \citet{mccann2001polar}
    and use $c$-concavity in the theory
    and derivations here.}}
Our goal is to represent optimal transport maps on
Riemannian manifolds $t:\gM\too \gM$.
The key idea is to build upon the theory of
\citet{mccann2001polar} and parameterize the space of optimal
transport maps by $c$-concave functions $\phi:\gM\too\gM$, see
\cref{def:c_concave}. Given a $c$-concave function, the map $t$ is
computed via \cref{e:ot}. This requires computing the intrinsic
gradient of $\phi$ and the exponential map on $\gM$.

\subsection{Discrete $c$-concave functions}
Let $\set{y_i}_{i\in
[m]}\subset \gM$ be a set of $m$ discrete points, where $[m]=\set{1,2,\ldots,m}$, and define the function $\psi$ to be
\begin{equation}
  \begin{aligned}
  \psi(x) =
    \begin{cases} \alpha_i & \text{if }x=y_i \\
    +\infty & \text{otherwise}
    \end{cases}
  \end{aligned}
\end{equation}
where $\alpha_i\in \Real$ are arbitrary. Plugging this choice in \cref{def:c_concave} of $c$-concave functions, we get that
\begin{equation}\label{e:inf_aff}
    \phi(x)=\min_{i\in [m]} \parr{ c(x,y_i) + \alpha_i }
\end{equation}
is $c$-concave. We denote the collection of these functions over $\gM$ by $\widehat{\gC}^d(\gM)$. We will use this modeling metaphor for parameterizing $c$-concave functions. Our learnable parameters of a single $c$-concave function $\psi$ will consist of
\begin{equation}\label{e:params}
 \theta=\set{(y_i,\alpha_i)}_{i\in [m]}\subset \gM\times \Real.
\end{equation}
Let $i_\star=\argmin_{i\in[m]}\parr{c(x,y_{i})+\alpha_{i}}$. The discrete $c$-concave function in \cref{e:inf_aff} is differentiable, except where two pieces $c(x,y_i)+\alpha_i$ meet, and if $x$ belongs to the cut locus of $y_{i_\star}$ on $\gM$,  which is of volume measure zero \citep{sakaicutlocus}. Excluding such cases, the gradient of $\phi$ at $x$ is:
\begin{equation}
\begin{aligned}
    \nabla_x \phi(x) &= \nabla_x \big[ c(x, y_{i_\star}) + \alpha_\star \big]\\
    &=\nabla_x  c(x, y_{i_\star}) = -\log_x(y_{i_\star}),
\end{aligned}
\label{e:grad_pot}
\end{equation}
where $\log$ is the logarithmic map on the manifold. See \cref{fig:discrete_c_concave} for an illustration of a discrete $c$-concave function.
Intuitively, the optimal transport generated by discrete $c$-concave functions is piecewise constant as $\exp_x(- \nabla_x \phi(x)) = \exp_x(-(-\log_x(y_{i_\star}))) = y_{i_\star}$. This can be seen as generalizing semi-discrete optimal transport  \cite{compoptpeyre}, which aims at finding transport maps between continuous and discrete probability measures, to the manifold setting.

\paragraph{Relation to the Euclidean concave case.}
In the Euclidean setting, \ie, when $\gM=\Real^d$ and $c(x,y)=-x^Ty$, a Euclidean concave (closed) function $\phi$ can be expressed as $$\phi(x) = \inf_{y\in\Real^d} \parr{ -x^Ty + \psi(y)}.$$
Replacing $\Real^d$ with a finite set of points $y_i\in\Real^d$, $i\in[m]$, leads to the \emph{discrete Legendre-Fenchel transform} \cite{lucet1997faster}; it basically amounts to approximating the concave function $\phi$ via the minimum of a collection of affine functions. This transform can be shown to converge to $\phi$ under refinement \cite{lucet1997faster}. We next prove convergence of discrete $c$-concave functions to their continuous counterparts.

\paragraph{Expressive power of discrete $c$-concave functions.}
Let us show that \cref{e:inf_aff} can approximate \emph{arbitrary} $c$-concave functions $\phi:\gM\too\Real\cup \set{\infty}$ on compact manifolds $\gM$.
We will prove the following theorem:
\begin{theorem}\label{thm:discrete_expressive}
For compact, boundaryless, smooth manifold $\gM$, we have $\widehat{\gC}^d(\gM)$ dense in $\widehat{\gC}(\gM)$.
\end{theorem}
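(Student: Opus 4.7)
The plan is to exploit the involution property $\phi^{cc} = \phi$ from \cref{e:involution}: rather than using an arbitrary $\psi$ in \cref{def:c_concave}, I work with the canonical representative $\phi^c$, writing
$$\phi(x) = \inf_{y \in \gM} \parr{c(x, y) + \phi^c(y)}.$$
Given a countable dense subset $\set{y_i}_{i=1}^\infty \subset \gM$, which exists because $\gM$ is a compact metric space and hence separable, I define
$$\phi_m(x) = \min_{i \in [m]} \parr{c(x, y_i) + \phi^c(y_i)} \in \widehat{\gC}^d(\gM),$$
and will show that $\phi_m \too \phi$ uniformly on $\gM$. This establishes density in the sup-norm topology, which is the natural one for this statement.

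For this approach to go through, I first need to verify that $\phi^c$ is a genuine continuous real-valued function on $\gM$, not a general extended-real object. Since $c(x,y) = \tfrac{1}{2}d(x,y)^2$ is uniformly continuous on the compact product $\gM \times \gM$, a standard optimal-transport argument gives that $\phi$ is Lipschitz continuous; applying the same reasoning to $\phi^c(y) = \inf_{x \in \gM}(c(x,y) - \phi(x))$ yields that $\phi^c$ is likewise continuous (in fact Lipschitz). Consequently the function $F(x,y) := c(x,y) + \phi^c(y)$ is continuous on the compact product, hence uniformly continuous.

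The approximation now splits into two inequalities. The bound $\phi_m \geq \phi$ holds pointwise because $\phi_m$ is an infimum over a subset of $\gM$. For the reverse direction, fix $\epsilon > 0$ and use uniform continuity of $F$ to pick $\delta > 0$ such that $d(y, y') < \delta$ implies $\abs{F(x,y) - F(x,y')} < \epsilon$ for every $x \in \gM$. By compactness of $\gM$ and density of $\set{y_i}$, there is $m_0$ such that for $m \geq m_0$ every $y \in \gM$ lies within $\delta$ of some $y_i$ with $i \leq m$. For each $x$, pick $y^\star(x) \in \gM$ that $\epsilon$-attains the infimum defining $\phi(x)$ (such a point exists by continuity and compactness) and an index $i_0 \leq m$ with $d(y_{i_0}, y^\star(x)) < \delta$; then
$$\phi(x) \leq \phi_m(x) \leq F(x, y_{i_0}) \leq F(x, y^\star(x)) + \epsilon \leq \phi(x) + 2\epsilon,$$
uniformly in $x$. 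Sending $\epsilon \too 0$ (taking $m$ correspondingly large) gives uniform convergence.

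The main subtlety is the first step: the $\psi$ appearing in \cref{e:c_concave} is in general an extended-real function that can be badly discontinuous or even infinite, so a naive sampling of its values at a dense set would not yield a useful approximation. The role of invoking $\phi^{cc} = \phi$ is precisely to replace any such $\psi$ with the canonical and well-behaved $\phi^c$; this is where the compact, smooth structure of $\gM$ and the squared-distance form of $c$ are used (to ensure Lipschitz regularity of $\phi$ and $\phi^c$). Once this reduction is in place, the argument collapses into standard uniform-continuity-on-a-compact-space reasoning.
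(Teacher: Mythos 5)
Your uniform-convergence argument is essentially the same as the paper's: use the involution $\phi^{cc}=\phi$ to replace the arbitrary $\psi$ in \cref{def:c_concave} by the canonical, Lipschitz potential $\hat{\phi}^c$, sample it at a finite net of anchors, and squeeze with a trivial one-sided bound (restricting the infimum) and a compactness/regularity bound on the other side. The paper works with an explicit $\eps$-net and the $|\gM|$-Lipschitz constants from Lemmas 1--2 of \citet{mccann2001polar}, whereas you invoke uniform continuity of $F$ on the compact product abstractly; these are the same estimate. (Minor notational point: under the paper's convention $\psi^c(y)=\inf_x\parr{c(x,y)-\psi(x)}$ together with $\phi^{cc}=\phi$, the correct representation is $\phi(x)=\inf_y\parr{c(x,y)-\phi^c(y)}$, not $+\phi^c(y)$; the sign does not change the structure of the argument.)

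However, there is a genuine gap. Immediately after the theorem statement the paper spells out what ``dense'' means here: $\phi_\eps(x)\too\hat{\phi}(x)$ \emph{and} $\nabla_x\phi_\eps(x)\too\nabla_x\hat{\phi}(x)$ for almost all $x\in\gM$. You prove only the first clause and then assert that sup-norm convergence ``is the natural one for this statement.'' It is not: the whole point of the density result is to feed into \cref{thm:universalRCPM}, where the approximating maps are $\exp_x\brac{-\nabla_x\phi_\eps(x)}$ and convergence of the \emph{gradients} is what makes the transport maps converge. Uniform convergence of a sequence of Lipschitz (or locally semi-concave) functions does not by itself give a.e.\ convergence of gradients; one needs an additional argument. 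The paper supplies it: after excising the zero-measure set where the potentials are non-differentiable or where $\hat t(x)$ hits the cut locus, it uses the fact (Lemma 7 of \citet{mccann2001polar}) that $h(y)=c(x,y)-\hat\phi^c(y)$ has the \emph{unique} minimizer $y_*=\exp_x[-\nabla_x\hat\phi(x)]$, a compactness/level-set argument ($A_\delta\subset B(y_*,\eps')$) to force the discrete minimizer $y_i$ into a small ball around $y_*$, and continuity of $y\mapsto-\log_x(y)$ away from the cut locus to conclude $\nabla_x\phi_\eps(x)=-\log_x(y_i)\too-\log_x(y_*)=\nabla_x\hat\phi(x)$. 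None of this appears in your proposal, so as written it proves a strictly weaker statement than the paper claims and needs.
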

By dense we mean that for every $\hat{\phi}\in\widehat{\gC}(\gM)$ there exists a sequence  $\phi_\eps\in\widehat{\gC}^d(\gM)$, where $\eps\dtoo 0$, so that for almost all $x\in\gM$ we have that $\phi_\eps(x)\too \hat{\phi}(x)$ and $\nabla_x\phi_\eps(x)\too \nabla_x\hat{\phi}(x)$, as $\eps\dtoo 0$.

The proof is based on a construction of $\phi_\eps$ using an $\eps$-\emph{net} of $\gM$. A set of points $\set{y_i}_{i\in [m]}\subset \gM$ is called $\eps$-net if $\gM\subset \cup_{i\in[m]} B(y_i,\eps)$, where $B(y_i,\eps)$ is the $\eps$-radius ball centered at $y_i$. Formulated differently, every point $y\in\gM$ has a point in the net that is at-most $\eps$ distance away. On compact manifolds, for arbitrary $\eps>0$, there exists a finite $\eps$-net $\set{y_i}_{i\in [m]}$. Note that $m\too\infty$ as $\eps\dtoo 0$, but it is finite for every particular $\eps$. Our candidate for approximating $\hat{\phi}$ is:
\begin{equation}\label{e:phi_eps}
     \phi_\eps(x) = \min_{i\in [m]} \parr{ c(x,y_i) -   \hat{\phi}^c(y_i) },\\
\end{equation}
where $\hat{\phi}^c$ is the infimal $c$-convolution (see \cref{e:c_infimal}) of $\hat{\phi}$. The approximation in \cref{e:phi_eps} is motivated by the involution property (\cref{e:involution}). $\hat{\phi}=(\hat{\phi}^c)^c$ and therefore
$$\hat{\phi}(x) = \inf_{y\in\gM} \parr{ c(x,y) - \hat{\phi}^c(y)}.$$

\begin{proof}
Let $\hat{\phi}:\gM\too\Real$ be an arbitrary $c$-concave function over $\gM$. Let $\eps\dtoo 0$ denote a sequence of positive numbers converging monotonically to zero. We will show that $\phi_\eps$ defined in \cref{e:phi_eps} converges uniformly to $\hat{\phi}$ over $\gM$ and furthermore, that their Riemannian gradients $\nabla \phi_\eps(x)$ converge pointwise to $\nabla \hat{\phi}(x)$ for almost all $x\in \gM$ (\ie, up to a set of zero volume).

\paragraph{Uniform convergence.}
We start by noting that $\hat{\phi}^c$ is also $c$-concave by definition, and Lemma 2 in \citet{mccann2001polar} implies that $\hat{\phi}^c$ is $|\gM|$-Lipschitz, namely
$$\abs{\hat{\phi}^c(x)-\hat{\phi}^c(y)}\leq |\gM| d(x,y),$$
for all $x,y\in \gM$.  We denote by $|\gM|$ the diameter of $\gM$, that is:
\begin{equation}
    \abs{\gM} = \sup_{x,y\in\gM} d(x,y),
\end{equation}
and $|\gM|<\infty$ since $\gM$ is compact. In particular $\hat{\phi}^c$ is either everywhere infinite (non-interesting case), or is finite (in fact, bounded) over $\gM$.

Next, we establish an upper bound. For all $x\in \gM$:
\begin{equation}
\begin{aligned}
  \hat{\phi}(x) &= \inf_{y\in\gM} \parr{c(x,y)-\hat{\phi}^c(y)} \\ &\leq  \min_{i\in [m]} \parr{ c(x,y_i) -   \hat{\phi}^c(y_i) }\\ &= \phi_\eps(x).
\end{aligned}
\end{equation}
Note that this upper bound is true for all choices of $y_i$. Next, we show a tight lower bound. 

Furthermore, Lemma 1 in \citet{mccann2001polar} asserts that $c(x,y)=\frac{1}{2}d(x,y)^2$ is also $|\gM|$-Lipschitz as a function of each of its variables. Therefore, using the $\eps$-net, we get that for each $x,y\in\gM$ there exists $i\in [m]$ so that
$$c(x,y)-\hat{\phi}^c(y) \geq c(x,y_i) - \hat{\phi}^c(y_i) - 2|\gM|\eps$$
leading to
\begin{equation}
\label{e:temp}
\begin{aligned}
  \hat{\phi}(x) &= \inf_{y\in\gM} \parr{ c(x,y) - \hat{\phi}^c(y)} \\ &\geq \min_{i\in [m]} \parr{ c(x,y_i) + \hat{\phi}^c(y_i) } - 2|\gM|\eps \\ &= \phi_\eps(x) - 2|\gM|\eps
\end{aligned}
\end{equation}
Therefore we have that $\phi_\eps$ converge uniformly in $\gM$ to $\hat{\phi}$.

\paragraph{Pointwise convergence of gradients.}
Let $O\subset \gM$ be the set of points where the gradients of $\hat{\phi}$ and $\phi_\eps$ (for the entire countable sequence $\eps$) are not defined, then $O$ is of volume-measure zero on $\gM$. Indeed, the functions $\hat{\phi}, \phi_\eps$ are differentiable almost everywhere on $\gM$ by Lemmas 2 and 4 in \citet{mccann2001polar}.
Furthermore, if we denote by $\hat{t}$ the optimal transport defined by $\hat{\phi}$, as discussed in Chapter 13 in \citet{villani2008optimal} the set of all $x\in\gM$ for which $\hat{t}(x)$ belongs to the cut locus is of measure zero. We add this set to $O$, keeping it of measure zero.

Fix $x\in \gM \setminus O$, and choose an arbitrary $\rho>0$. 
We show convergence of $\nabla_x \phi_\eps(x) \too \nabla_x \hat{\phi}(x)$ by showing we can take element $\eps$ small enough so that the two tangent vectors $\nabla_x \phi_\eps(x), \nabla_x \hat{\phi}(x)\in T_x\gM$ are at most $\rho$ apart.

Lemma 7 in \citet{mccann2001polar} shows that the unique minimizer of
$h(y) = c(x,y)-\hat{\phi}^c(y)$ is achieved at $y_*=\exp_x[-\nabla_x\hat{\phi}(x)]$. In particular,  $\nabla_x\hat{\phi}(x) = -\log_x(y_\star)$. As explained above, $y_*$ is not on the cut locus of $x$.

Recall that $\nabla_x c(x,y) = -\log_x(y)$ \citep{mccann2001polar}, which is a continuous function of $y$ in vicinity of $y_*$. Therefore there exists an $\eps'>0$ so that if $y\in B(y_*,\eps')$ we have that $\norm{-\log_x(y)+\log_x(y_*)}<\rho$, where the norm is the Riemannian norm in the tangent space at $x$, \ie, $T_x\gM$.

Consider the set $A_\delta = \set{y\in\gM \ \vert\ h(y) < h(y_*)+\delta}$. Since $h(y)$ is continuous (in fact, Lipschitz) and $y_*$ is its unique minimum, we can find a $0<\delta$ sufficiently small  so that $A_\delta\subset B(y_*,\eps')$. This means that any $y\notin B(y_*,\eps')$ satisfies $h(y)\geq h(y_*)+\delta$.
On the other hand, from continuity of $h$ we can find $\eps<\eps'$ so that all $y\in B(y_*,\eps)$ we have $h(y) < h(y_*)+\delta$.

Now consider the element $\phi_{\eps}$. Due to the $\eps$-net we know there is at-least one $y_i\in B(y_*,\eps)$ leading to $h(y_i)<h(y_*)+\delta$, and as mentioned above every $y\notin B(y_*,\eps')$ satisfies $h(y)\geq h(y_*)+\delta$. This means that the $y_i$ that achieves the minimum of $h(y_i)$ among all $i\in[m]$ in \cref{e:phi_eps} has to reside in $B(y_*,\eps')$, and $\phi_{\eps}(x)=c(x,y_i)-\hat{\phi}^c(y_i)$ in a small neighborhood of $x$.
Therefore, $\nabla_x \phi_{\eps}(x) = -\log_x(y_i)$.
Since $\nabla_x \hat{\phi}(x) = -\log_x(y_*)$ and $d(y_i,y_*)<\eps'$, our choice of $\eps'$ implies that  $\norm{\nabla_x \hat{\phi}(x)-\nabla_x \phi_{\eps}(x)}<\rho$.
\end{proof}

\subsection{RCPM  architecture}
\label{sec:RCPM_block}

Now that we have set-up an expressive approximation to $c$-concave functions we can take the same route as  \citet{rezende2020normalizing}, and define individual flow
blocks $s_j$, $j\in [T]$ (see \cref{e:s_as_comp}) using the exponential map as suggested by McCann's theorem.
Each flow block $s_j$ is defined as:
\begin{align}\label{e:flow_layers}
    s_j(x)&=\exp(-\nabla_x\phi_j(x)), \quad j=1,\ldots, T\\
    \label{e:flow_layers2}
    \phi_j(x) &= \min_{i\in [m]} \parr{ c(x,y^{(j)}_i) + \alpha^{(j)}_i}.
\end{align}
We learn both locations $y_i^{(j)} \in \gM$ and offsets
$\alpha_i^{(j)} \in \R$ for $i\in [m]$ and $j\in[T]$; these form our model parameters $\theta$. We also consider multi-layer blocks as detailed later.

\subsection{Universality of RCPM}
We next build upon \cref{thm:discrete_expressive} to show RCPM is universal. We show that a single block $s$, \ie, \cref{e:flow_layers,e:flow_layers2} with $T=1$ can already approximate arbitrary the optimal transport $t:\gM\too\gM$. Due to the theory of  \citet{mccann2001polar} (see \cref{ss:mccan}) this means that $s$ can push any absolutely continuous base probability $\mu$ to a general $\nu$ arbitrarily well.
\begin{theorem}
\label{thm:universalRCPM}
If $\mu$, $\nu$ are two probability measures in $\gP(\gM)$ and $\mu$ is absolutely continuous w.r.t volume measure of $\gM$, then there exists a sequence of discrete $c$-concave potentials $\phi_{\epsilon}$, where $\eps\dtoo 0$, such that $\exp\brac{-\nabla \phi_{\epsilon}} \xrightarrow[]{p} t$, where $t$ is the optimal map pushing $\mu$ to $\nu$ and $p$ denotes convergence in probability.
\end{theorem}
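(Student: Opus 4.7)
The plan is to combine McCann's theorem (\cref{ss:mccan}) with the density result \cref{thm:discrete_expressive} and then upgrade pointwise/almost sure convergence to convergence in probability using absolute continuity of $\mu$.

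First, since $\gM$ is compact boundaryless smooth and $\mu$ is absolutely continuous with respect to the volume measure, McCann's theorem (\cref{e:ot}) guarantees the existence of a $c$-concave function $\hat{\phi}\in\widehat{\gC}(\gM)$ whose associated map $t(x)=\exp_x[-\nabla\hat{\phi}(x)]$ is the unique optimal transport pushing $\mu$ to $\nu$. In particular, $t$ is defined $\mu$-almost everywhere.

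Next, apply \cref{thm:discrete_expressive} to $\hat{\phi}$: this yields a sequence $\phi_\eps\in\widehat{\gC}^d(\gM)$ (explicitly given by \cref{e:phi_eps} built on an $\eps$-net) such that for almost every $x\in\gM$ (with respect to the volume measure), the Riemannian gradients satisfy $\nabla_x\phi_\eps(x)\to\nabla_x\hat{\phi}(x)$ in $T_x\gM$ as $\eps\dtoo 0$. Since $\mu$ is absolutely continuous with respect to the volume measure, this pointwise convergence holds also $\mu$-almost everywhere. Composing with the Riemannian exponential map, which is smooth (hence continuous) on $T\gM$, we obtain
\begin{equation*}
\exp_x\bigl[-\nabla_x\phi_\eps(x)\bigr]\ \xrightarrow[\eps\dtoo 0]{}\ \exp_x\bigl[-\nabla_x\hat{\phi}(x)\bigr]=t(x)
\end{equation*}
for $\mu$-almost every $x\in\gM$.

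Finally, almost sure convergence (with respect to $\mu$) on a metric space implies convergence in probability: for any $\rho>0$, by dominated convergence (using that $\gM$ is compact and thus bounded), $\mu\bigl(\{x:d(\exp_x[-\nabla\phi_\eps(x)],t(x))>\rho\}\bigr)\to 0$ as $\eps\dtoo 0$. Hence $\exp[-\nabla\phi_\eps]\xrightarrow{p} t$ as claimed. The main subtlety is ensuring that the volume-measure-zero exceptional set from \cref{thm:discrete_expressive} (where gradients may fail to exist, or $t(x)$ lies on a cut locus) is genuinely $\mu$-null; this is precisely where absolute continuity of $\mu$ is used.
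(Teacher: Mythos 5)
Your proof is correct and takes essentially the same route as the paper: invoke McCann's theorem for $\hat\phi$ and $t$, apply \cref{thm:discrete_expressive} to get $\mu$-a.e.\ convergence of the gradients, compose with the continuous exponential map, and note that $\mu$-a.e.\ convergence gives convergence in probability. The only cosmetic difference is that you justify the last step via dominated convergence, whereas the paper leaves it implicit (and a.e.\ convergence on a probability space already implies convergence in probability without any boundedness argument), and the paper cites Hopf--Rinow to ensure $\exp$ is defined on all of $T_x\gM$ where you simply appeal to smoothness.
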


\begin{proof}
Let $\phi_\eps$ be the sequence from \cref{e:phi_eps}. It is enough to show pointwise convergence of $\exp[-\nabla\phi_\eps(x)]$ to $t(x)=\exp[-\nabla\hat{\phi}(x)]$ for $\mu$-almost every $x$. Note, as above, that the set of points $O\subset \gM$ where the gradients of $\phi$ and $\phi_\eps$ are not defined is of $\mu$-measure zero. So fix $x\in \gM \setminus O$. 

\Cref{thm:discrete_expressive} implies that the tangent vector $\nabla_x \phi_\eps(x) \in T_x\gM$ converges in the Riemannian norm over $T_x\gM$ to $\nabla_x \phi(x) \in T_x\gM$. Furthermore, from the Hopf-Rinow Theorem $\exp$ is defined over all $T_x\gM$ and it is continuous where it is defined \cite{mccann2001polar}. This shows the pointwise convergence. \end{proof}

As a result of \cref{thm:universalRCPM}, the multi-block version of RCPM is also universal, because individual blocks can approximate the identity arbitrarily well according to \cref{thm:discrete_expressive}.

\section{On Implementing and Training RCPMs}

We now describe how to train Riemannian convex potential maps and how to increase their flexibility and expressivity through architectural choices preserving $c$-concavity. 

\subsection{Variants of RCPM}
\label{sec:multilayer}
Our basic model is multi-block $s=s_T\circ\cdots\circ s_1$, where $s_i$ are defined in \cref{e:flow_layers,e:flow_layers2}. We also consider two variants of this model. Let us denote $\sigma(s)=\min\set{0,s}$, the concave analog of ReLU.

\paragraph{Multi-layer block on convex spaces.}
First, in some manifolds, $c$-concave functions form a convex space, that is convex combination of $c$-concave functions is again $c$-concave. Examples of such spaces include  Euclidean spaces, spheres \cite{sei2013jacobian}, and product of spheres (\eg, tori) \citep{multiplespheresfig}. One possibility to enrich our discrete $c$-concave model in such spaces is to convex combine and compose multiple $c$-concave potentials which preserves $c$-concavity, similar in spirit to ICNN \cite{amos2017input}. In more detail, we define the $c$-concave potential of a single block $\varphi_j$, $j\in [T]$, to be a convex combination and composition of several discrete $c$-concave functions. For brevity let $\varphi=\varphi_j$, and define $\varphi=\psi_K$, where $\psi_K$ is defined by:
\begin{equation}
\begin{aligned}
    \psi_0 &= 0 , \\
    \psi_{k} &= (1-w_{k-1})\phi_{k-1} + w_{k-1} \sigma( \psi_{k-1}),
\end{aligned}
\end{equation}
where $k\in [K]$, $w_k\in [0,1]$ are learnable weights, and $\phi_k\in \widehat{\gC}^d(\gM)$ are discrete $c$-concave functions used to define the $j$-th block. The RCPMs from \cref{sec:RCPM_block} can be reproduced with $K=1$. In general RCPMs in this case are composed by $T$ blocks, each is built out of $K$ discrete $c$-concave function.

\paragraph{Identity initialization.}
In the general case (\ie, even in manifolds where $c$-concave functions are not a convex space) one can still define 
\begin{equation}\label{e:identity_reproduction}
    \varphi_j(x) = \sigma (\phi_j(x)).
\end{equation}
We note that if all $\alpha_i\geq 0$ at initialization, $\sigma (\phi_j(x))\equiv 0$. In this case, we claim that the initial flow is the identity map, that is $s(x)=x$. Indeed, the gradient of a constant function vanishes everywhere, and by definition of the OT, $s(x)=\exp_x[0]=x$.

\subsection{Learning}
We now discuss how to train the proposed flow model. We denote by $\nu_\theta = s_\# \mu$, the prior density pushed by our RCPM model $s$, with parameters $\theta$. To learn a target distribution $\nu$ we consider either minimizing the KL divergence between the generated distribution $\nu_\theta$ and the data distribution $\nu$:
\begin{align}
    \text{KL}(\nu_\theta | \nu) = \E_{\nu_\theta(x)} \big[\log \nu_\theta(x)- \log \nu(x)\big ]\label{e:klloss}
\end{align}
or, minimizing the negative log-likelihood under the model:
\begin{align}
\text{NLL}(\theta) = -\E_{\nu(x)}\log\mu_\theta(x).\label{e:maxlik}
\end{align}
We optimize these with gradient-based methods.

For low-dimensional manifolds, the Jacobian log-determinants appearing
in the computation of KL/likelihood losses can be exactly computed
efficiently. For higher-dimensional manifolds, stochastic trace
estimation techniques can be leveraged \cite{huang2020convex}.

Depending on the considered application, it may be more practical to
parameterize either the forward mapping (from base to target), or the
backward mapping (from target to base). For instance, in the density
estimation context, the backward map from target samples to base
samples is typically parameterized, and can be trained by maximum
likelihood (minimizing NNL) using \cref{e:maxlik}.

\subsection{Smoothing via the soft-min operation}
\label{sec:smooth}
While the proposed layers $s_i$ are universal, they are defined using
the gradients of the discrete $c$-concave potentials that take the
form $\nabla_x \phi(x) = -\log_x(y_i)$, where $y_i$ is the argument
minimizing the r.h.s.~in \cref{e:inf_aff} (see also
\cref{e:grad_pot}). This means that the $\alpha_i$ do
not transfer gradients. Intuitively, considering
\cref{fig:discrete_c_concave}, the $\alpha_i$ represent the heights of
the different $c$-concave pieces and since we only work with their
derivatives, the heights are not ``seen'' by the
optimizer. Furthermore, potential gradients $\nabla_x \phi$ are
discontinuous at meeting points of $c$-concave pieces.

We alleviate both problems by replacing the $\min$ operation by a soft-min operation, $\min_\gamma$, similarly to \citet{softdtw}. The soft-min operation $\min_\gamma$ is defined as
\begin{align}\label{e:min_gamma}
    \min\nolimits_\gamma(a_1,\ldots, a_n) = -\gamma \log \sum_{i=1}^n \exp\parr{-\frac{a_i}{\gamma}}.
\end{align}
In the limit $\gamma \rightarrow 0$,  $\min_\gamma \rightarrow \min$. While $c$-concavity is not guaranteed to be preserved by this modification, it is recovered in the $\gamma \rightarrow 0$ limit. Also, gradients with respect to offsets are not zero anymore, and $\alpha_i^{(j)}$ are optimized through the training process.

\subsection{Discussion}
We now discuss some practice-theory gaps, and mark interesting open questions and future work directions. 

\paragraph{Model smoothing and optimization.} The construction in Section $4$, \ie, exponential map of a discrete $c$-concave function, is an optimal transport map and universal in the sense that it can approximate any OT between an absolutely continuous $\mu$ and arbitrary $\nu$, over a compact Riemannian manifold. It is not, however, a diffeomorphism. As a practical way of optimizing this model to approximate arbitrary $\nu$ we suggested smoothing the min operation with soft-min. If this, now a differentiable function, is $c$-concave then the smoothed version leads to a diffeomorphism (flow). While we are unable to prove that the soft version is $c$-concave, we verified numerically that it indeed leads to a diffeomorphism (see \cref{fig:diffeomorphism}, Appendix). We leave the question of whether the soft-min operator preserves $c$-convexity on the sphere and more general manifolds to future work. Furthermore, it is worth noting that the proposed model can potentially be optimized with other methods than as a flow, for instance by directly optimizing the Wasserstein loss similarly to \citet{makkuva2019optimal} in the Euclidean case, or semi-discrete transport methods \citep{compoptpeyre}. Both would not require the map to be a diffeomorphism. We leave such directions to future work as well.

\paragraph{Scalability.} We follow
\citet{rezende2020normalizing}, which relies on reformulating the
log-determinant in terms of the Jacobian expressed via an orthonormal
basis of the tangent space. The Jacobian determinant term is similar
to the Euclidean case suggesting that the high dimensional case can
reuse techniques from \citet{huang2020convex}.
\Cref{tab:rezende_runtime} shows our running times are comparable to
\citet{rezende2020normalizing}.

\section{Experiments}
This section empirically demonstrates the
practicality and flexibility of RCPMs.
We consider synthetic manifold learning tasks
similar to \citet{rezende2020normalizing,lou2020neural} on
both spheres and tori, and a real-life application over the sphere.
We cover the different use cases of RCPMs: density estimation,
mapping estimation and geodesic transport.

\begin{table}
  \centering
  \caption{We trained a RCPM to optimize the KL 
    on the 4-mode dataset shown in \cref{fig:sphere-data}
    and compare the KL and ESS to the M\"obius-spline flow (MS)
    and exponential-map sum-of-radial flow (EMSRE)
    from \citet{rezende2020normalizing}.
    We report the mean and standard derivation from
    10 trials of the RCPM.
    \vspace{2mm}
  }
  \begin{tabular}{lll} \toprule
    Model & KL [nats] & ESS \\ \midrule
    M\"obius-spline Flow & 0.05 {\small (0.01)} & 90\% \\
    Radial Flow & 0.10 {\small (0.10)} & 85\% \\
    \textbf{RCPM} & \textbf{0.003 {\small(0.0004)}} & \textbf{99.3\%} \\ \bottomrule
  \end{tabular}
  \label{tab:rezende_kl}
\end{table}

\begin{table}
\centering
\caption{Comparison of the runtime per training iteration of our
model with Rezende et al.  over 1000 trials with
batch size of 256.\vspace{2mm}}
\begin{tabular}[t!]{ll}\toprule
Method & Runtime (sec/iteration) \\ \midrule
Radial ($N_T=1$, $K=12$) & $2.05\cdot 10^{-3} \pm 1.33\cdot 10^{-4}$ \\
Radial ($N_T=6$, $K=5$) &  $6.26\cdot 10^{-3} \pm 2.95\cdot 10^{-4}$ \\
Radial ($N_T=24$, $K=1$) & $1.92\cdot 10^{-2} \pm 5.24\cdot 10^{-4}$ \\
RCPM ($N_T=5$, $K=68$) & $8.79\cdot 10^{-3} \pm 1.81\cdot 10^{-4}$ \\ \bottomrule
\end{tabular}
\label{tab:rezende_runtime}
\end{table}

\subsection{Synthetic Sphere Experiments}

\paragraph{KL training.}
Our first experiment is taken from \citet{rezende2020normalizing}, the
task is to train a Riemannian flow generating a $4$-modal distribution
defined on the $\gS^2$ sphere via a reverse-mode KL minimization. This
experiment allows quantitative comparison of the different models'
theoretical and practical expressiveness. We report results obtained
with their best performing models: a M\"obius-spline flow and a
radial flow. The latter is an exponential-map flow with radial layers
($24$ block of $1$ component). We train a $5$-block RCPM.
The exponential map and the intrinsic distance required for RCPMs are
closed-form for the sphere.
More implementation details are provided
in \cref{sec:additional-exps}.

\begin{figure}[t]
  \centering
  \begin{tabular}{cc}
   True & RCPMs \\ \midrule[1pt]
    \includegraphics[width=0.49\columnwidth]{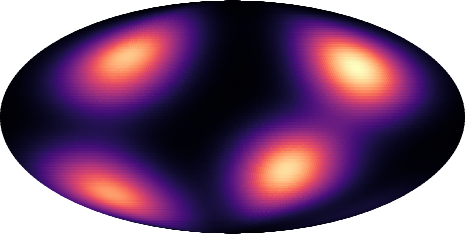} &
    \includegraphics[width=0.49\columnwidth]{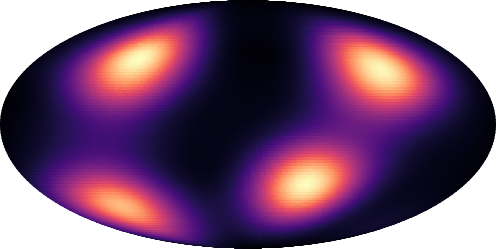} \\
    \includegraphics[width=0.49\columnwidth]{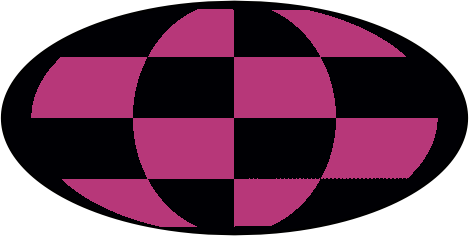} &
    \includegraphics[width=0.49\columnwidth]{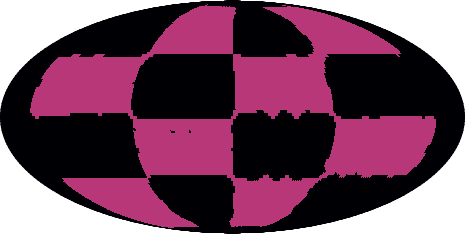} \\
  \end{tabular}
  \caption{
    Learned RCPMs on the sphere.
    Following \citet{rezende2020normalizing},
    we learn the first density with the reverse KL, and
    following \citet{lou2020neural},
    we learn the second with maximum likelihood.
  }
  \label{fig:sphere-data}
\end{figure}

Results are logged in \cref{tab:rezende_kl}. Notably, our model
significantly outperforms both baseline models with a KL of $0.003$,
almost an order of magnitude smaller than the runner-up with a KL of
$0.05$. This highlights the expressive power of the RCPM model
class. We also provide a visualization of the trained RCPM in
\cref{fig:sphere-data} (top row), where we show KDE estimates performed
in spherical coordinates with a bandwidth of 0.2. Finally,
\cref{tab:rezende_runtime} compares the runtime per training iteration of our
model with \citet{rezende2020normalizing}’s models over 1000 trials with
a batch size of 256. Our model’s speed is comparable to
theirs while leading to significantly improved KL/ESS.

\paragraph{Likelihood training.} We demonstrate an RCPM trained
via maximum likelihood on a more challenging dataset, the
checkerboard, also studied in \citet{lou2020neural}.
\Cref{fig:sphere-data} (bottom) shows the RCPM
generated density on the right.
We found
visualizing the density of our model challenging because some regions
had unusually high density values around the poles. We hence binarized
the density plot. We provide the original density in
\cref{fig:binarized-density} (Supplementary).

\subsection{Torus}

\begin{figure*}[t]
    \centering
    \begin{tabular}{ccc}
      Base & RCPM & Target \\ \midrule[1pt]
    \includegraphics[width=0.32\textwidth]{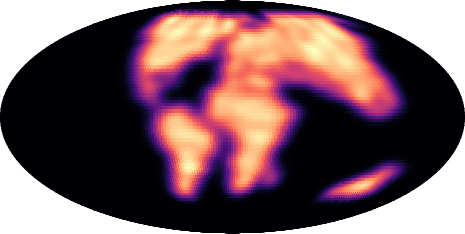} &
    \includegraphics[width=0.32\textwidth]{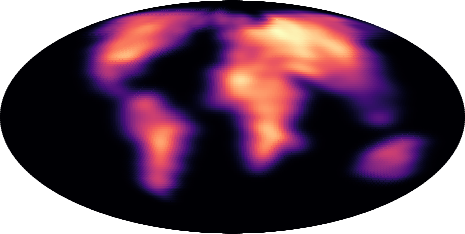} &
    \includegraphics[width=0.32\textwidth]{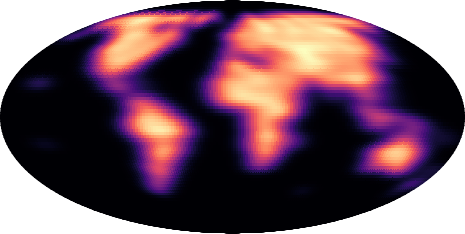} \\
    \end{tabular}
    \caption{We trained a 7-block RCPM flow to learn to map a base density over ground mass on earth of $90$ million years ago such a density over current earth. To learn, we minimize the KL divergence between the model and the target distribution. }
    \label{fig:earth}
\end{figure*}

\begin{figure}[t]
  \begin{tabular}{cc}
   True & RCPM \\ \midrule[1pt]
  \includegraphics[width=0.49\hsize]{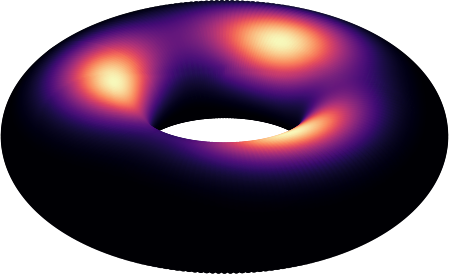} &
  \includegraphics[width=0.49\hsize]{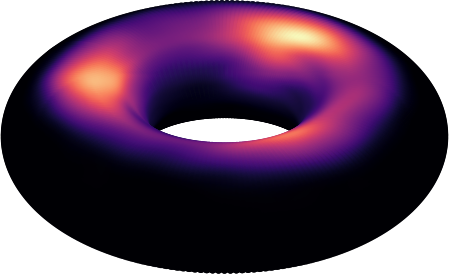} \\
  \end{tabular}
  \caption{We trained an RCPM $\nu_\theta$ to learn a $3$-modal density $\nu$ on the torus $T^2 = \gS^1 \times \gS^1$ (KL: $0.03$, ESS: $94.7$). }
  \label{fig:torus}
\end{figure}
We now consider an experiment on the torus: $\gT^2 = \gS^1 \times
\gS^1$. The exponential map and intrinsic distance
required for RCPMs are known in closed-form.
The exponential map flows in
\citet{rezende2020normalizing} do not apply to this setting as their
$c$-concave layers are specific to the sphere.

We train a 6-block RCPM model (with 1 layer per block) by KL minimization. As can be inspected from \cref{fig:torus}, the RCPM model is able to recover the target density accurately, and the model achieves a KL of $0.03$ and an ESS of $94.7$.

\subsection{Case Study: Continental Drift\footnote{
The source maps of
\cref{fig:earth,fig:geodesicearth,fig:dens_est_earth}
are © 2020 Colorado Plateau Geosystems Inc.}}
Finally, we consider a real-world application of our model on
geological data in the context of continental drift
\citep{contdrift}. We aim to demonstrate the versatility and
flexibility of the framework with three distinct settings: mapping
estimation, density estimation, and geodesic transport, all through
the lens of RCPMs.

\paragraph{Mapping estimation.} We begin with mapping estimation. We aim to learn a flow $t$ mapping the base distribution of ground mass on earth $90$ million years ago (\cref{fig:earth}, left), to a ground mass distribution on current earth (\cref{fig:earth}, right) -- the target. We train a $7$-blocks RCPM with $3$-layers blocks (see \cref{sec:multilayer}) by minimizing the KL divergence between the model and target distributions.
In \cref{fig:earth} (Middle), we show the RCPM result, where it successfully learns to recover the target density over current earth. Hence, the mapping $t$ can be used to map mass from ``old'' earth to current earth.

\paragraph{Transport geodesics.} We demonstrate the use of transport geodesics induced by exponential-map flows. We train a $1$-block RCPM which allows to recover approximations of optimal-transport geodesics following \cref{e:otgeodesics}. These curves are induced by transport mappings $\exp(t\nabla\phi), t\in [0,1]$, which we visualize for a grid of starting points $x_0$ on the sphere in \cref{fig:geodesicearth}. Such geodesics illustrate the optimal transport evolution of earth ground across times. This relates to the well-known and studied geological process of continental drift \citep{contdrift}. North-American and Eurasian tectonic plates move away from each other at a small rate per year, which is illustrated in \cref{e:otgeodesics}. Denote as ``junction'' the junction between  Eurasian and North-American continents in ``old'' earth. We observe that particles $x_0$ located at the right of the junction will have geodesics transporting them towards the right, while particles located at the left of such junction will be transported towards the left, which is the expected behavior given the evolution of continental locations across time (see \cref{fig:earth} left and right). 

\paragraph{Density estimation.} Finally, we consider RCPMs as density estimation tools. In this setting, we aim to learn a flow from a known base distribution (\eg, uniform on the sphere) to a target distribution (\eg, distribution of mass over earth) given samples from the latter. We train an RCPM model with $6$ blocks (and $1$ layer per block) by maximum likelihood. We show the results for this experiment in \cref{fig:dens_est_earth}. We observe that the model is able to recover the distribution of mass on current earth.

\begin{figure}[t]
    \centering
    \includegraphics[width=0.8\hsize]{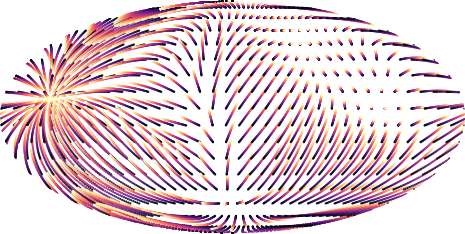}
    \caption{Plot of the transport geodesics arising from a 1-block RCPM trained in the setting of \cref{fig:earth}, and following \cref{e:otgeodesics}. We observe that samples stretch according to continental movements. }
    \label{fig:geodesicearth}
\end{figure}

\begin{figure}[t]
    \centering
    \begin{tabular}{cc}
    True & RCPM \\ \midrule[1pt]
    \includegraphics[width=0.49\columnwidth]{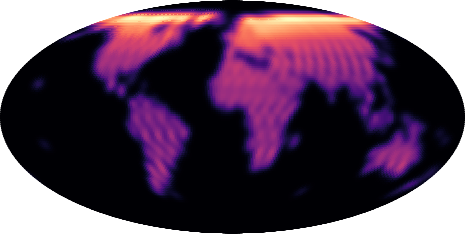} &
    \includegraphics[width=0.49\columnwidth]{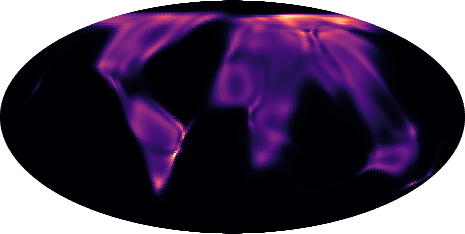}
    \end{tabular}
    \caption{We trained a 6-block RCPM in the density estimation setting. The base distribution is the uniform distribution on the sphere and the target $\nu$ is the ground of current earth.}
    \label{fig:dens_est_earth}
\end{figure}

\section{Conclusion}
In this paper, we propose to build flows on compact Riemannian manifolds following the celebrated theory of \citet{mccann2001polar}, that is using exponential map applied to gradients of $c$-concave functions. Our main contribution is observing that the rather intricate space of $c$-concave functions over arbitrary compact Riemannian manifold can be approximated with discrete $c$-concave functions. We provide a theoretical result showing that discrete $c$-concave functions are dense in the space of $c$-concave functions. We use this theoretical result to prove that maps defined via a discrete $c$-concave potentials are universal. Namely, can approximate arbitrary optimal transports between an absolutely continuous source distribution and arbitrary target distribution on the manifold.  

We build upon this theory to design a practical model, RCPM, that can be applied to any manifold where the exponential map and the intrinsic distance are known, and enjoys maximal expressive power. We experimented with RCPM, and used it to train flows on spheres and tori, on both synthetic and real data. We observed that RCPM outperforms previous approaches on standard manifold flow tasks. We also provided a case study demonstrating the potential of RCPMs for applications in geology. 

Future work includes training flows on more general manifolds, \eg, manifolds defined with signed distance functions, and using RCPM on other manifold learning tasks where the expressive power of RCPM can potentially make a difference. One particular interesting venue is generalizing the estimation of barycenters (means) of probability measures on Euclidean spaces to the Riemannian setting through the use of discrete $c$-concave functions.

\section*{Acknowledgments}
We thank Ricky Chen, Laurent Dinh, Maximilian Nickel and Marc Deisenroth
for insightful discussions
and acknowledge the Python community
\citep{van1995python,oliphant2007python}
for creating
the core tools that enabled our work, including
JAX \citep{jax2018github},
Hydra \citep{Yadan2019Hydra},
Jupyter \citep{kluyver2016jupyter},
Matplotlib \citep{hunter2007matplotlib},
numpy \citep{oliphant2006guide,van2011numpy},
pandas \citep{mckinney2012python}, and
SciPy \citep{jones2014scipy}.

{\small
\bibliography{flow}
\bibliographystyle{icml2021}
}

\clearpage
\appendix
\twocolumn[
\icmltitle{Riemannian Convex Potential Maps: Supplementary Material}
]

\section{Manifold Operations}

We briefly describe manifold operations, on a Riemannian manifold $\gM$ with metric $g$, used in this paper. Specifically, we define the exponential map $\exp$ and the intrinsic manifold distance $d_{\gM}$.

\paragraph{Exponential map.}
Let $x \in \gM$, $v \in T_x\gM$ and consider the unique geodesic $\gamma:[0,1]\rightarrow\gM$ such that $\gamma(0) = x$ and $\gamma'(0) = v$. The exponential map at $x$, $\exp_x : T_x\gM \rightarrow \gM$,
 is defined as 
\begin{align}
    \exp_x(v) = \gamma(1).
\end{align}

\paragraph{Intrinsic distance.} Define the length of a curve $\gamma : [0,1]\rightarrow \gM$ as 
\begin{align}
    L(\gamma) = \int_0^1 \norm{\gamma'(t)}_{g}dt, 
\end{align}
where $\norm{\gamma'(t)}_g$ means taking the norm of the velocity $\gamma'(t)$ at $T_{\gamma(t)} \gM$ with respect to the metric $g$ of the manifold $\gM$. Then, the intrinsic distance $d_{\gM}$ between $x,y \in \gM$ is:
\begin{align}
    d_{\gM}(x,y) = \inf_{\gamma}L(\gamma)
\end{align}
where the $\inf$ is over curves $\gamma:[0,1]\rightarrow\gM$ where  $\gamma(0)=x$ and $\gamma(1)=y$. If $\gM$ is \emph{complete} (see \eg, Hopf-Rinow Theorem) the intrinsic distance is realized by a geodesic.

\paragraph{Sphere.} On the $n$-sphere $\gS^n$, the exponential map and the intrinsic distance are provided as closed-form expressions. If $x, y \in \gS^n$ and $v \in T_x\gS^n$,
\begin{align}
    \exp_x(v) = x \cos(\norm{v}) + \frac{v}{\norm{v}}\sin(\norm{v}) \label{e:expsph}
\end{align}
\begin{align}
    d_{\gS^n}(x,y) = \arccos(x^Ty)\label{e:distsph},
\end{align}
where $\norm{\cdot}$ is the standard Euclidean norm.

\paragraph{Product manifolds.} We now consider operations on product manifolds of the form $\gM = \gM_1 \times\ldots\times \gM_l$. The squared intrinsic distance is simply
\begin{align}
    d^2_{\gM}(x,y) =  d^2_{\gM_1}(x_1,y_1)+\ldots+d^2_{\gM_l}(x_l,y_l).\label{e:decomp_product_cost}
\end{align}
Here  $x = (x_1,\ldots,x_l)$, and $x_j \in \gM_j,\ \ j\in [l]$ (and similarly for $y$). The exponential map on the product manifold is the cartesian product of exponential maps on the individual manifolds. An instantiation of such product that will be considered in experiments is the torus $\gS^1 \times \gS^1$. In that case, we can use \cref{e:expsph,e:distsph} to get the exponential map and squared intrinsic distance in closed-form. 

\section{Proof of c-concavity of the multi-layer potential}

\begin{proof}
The proof is by induction. Constant functions are c-concave, hence $\psi_0$ is c-concave.  Also, $\psi_1=(1-w_0)\phi_0$ is c-concave by the assumption of  convexity of the space of $c$-concave functions.   Next, assuming $\psi_{k-1}(x)$ is c-concave, $\sigma(\psi_{k-1})$ is also c-concave (because $\sigma$ preserves $c$-concavity), and $\psi_k(x)$ is c-concave because convex combinations of $c$-concave functions are $c$-concave. In conclusion, $\varphi = \psi_K$ is $c$-concave
\end{proof}

\section{Additional experimental and implementation details}
\label{sec:additional-exps}

\subsection{Synthetic Sphere}
We conducted a hyper-parameter search over the parameters in
\cref{table:sweep} to find the flows used in our demonstrations
and experiments. We report results from the best hyper-parameters
obtained by randomly sampling the space of parameters.
The $\alpha$ values are initialized from
$\gU[\alpha_{\rm min}, \alpha_{\rm min}+\alpha_{\rm range}]$. Also, $\gamma_1$ corresponds to the softing coefficient of the soft-min operation of discrete $c$-concave potentials, and $\gamma_2$ to the softing coefficient of the soft-min operation in the identity initialization (see \cref{sec:multilayer,sec:smooth}).

\begin{table}[!ht]
  \centering
  \caption{Hyper-parameter sweep for our sphere results}
  \label{table:sweep}
  \begin{tabular}{ll} \toprule
    \multicolumn{2}{c}{Adam} \\ \midrule
    learning rate & [$10^{-6}$, $10^{-1}$] \\
    $\beta_1$ & [0.1, 0.3, 0.5, 0.7, 0.9] \\
    $\beta_2$ & [0.1, 0.3, 0.5, 0.7, 0.9, 0.99, 0.999] \\
    \midrule \multicolumn{2}{c}{Flow Hyper-parameters} \\ \midrule
    Nb. of Components $y_i$ & [50, 1000] \\
    $\alpha_{\rm min}$ & [$10^{-5}$, 10] \\
    $\alpha_{\rm range}$ & [$10^{-3}$, 1] \\
     $\gamma_1$ & [0.01, 0.05, 0.1, 0.5] \\
     $\gamma_2$ & [None, 0.01, 0.05, 0.1, 0.5] \\
  \end{tabular}
\end{table}

We now verify empirically whether the RPCM define diffeomorphisms in practice. We compute Jacobian log-determinants of the flow trained on the $4$-modal density taken from \citet{rezende2020normalizing} for $10^6$ points uniformly sampled on the sphere, and observe that all these are positive (see \cref{fig:diffeomorphism}).
\begin{figure}
    \centering
    \includegraphics[width=0.5\hsize]{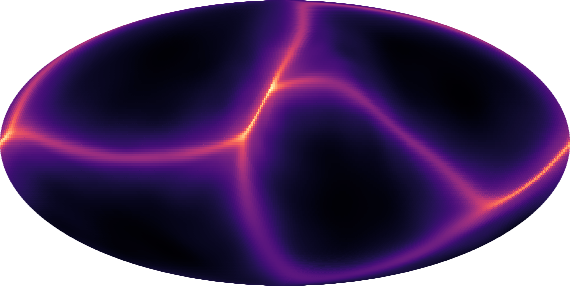}
    \caption{Jacobian log-determinants for points uniformly sampled on the sphere.}
    \label{fig:diffeomorphism}
\end{figure}

\paragraph{Binarized checkerboard density.}
We found it difficult to visualize the learned
density of our model on the checkerboard because a few
regions have unusually high values that mess up
the ranges of the colormap.
For visualization purposes, we binarize the density
values by taking the portion of the density greater
than the uniform density.
\Cref{fig:binarized-density} shows the original
and binarized densities of our models.

\begin{figure}
  \newcommand{\alphamin}{\ensuremath{\alpha_\mathrm{min}}}
  \newcommand{\alphamax}{\ensuremath{\alpha_\mathrm{max}}}
  \begin{tabular}{cc}
   Original Density & Binarized Density \\
  \includegraphics[width=0.45\columnwidth]{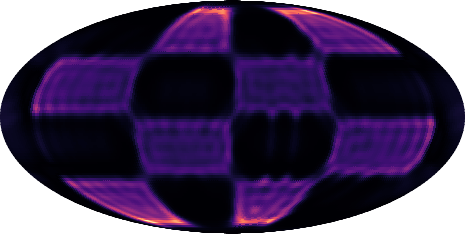} &
  \includegraphics[width=0.45\columnwidth]{images/checkerboard-binary.png}
  \end{tabular}
  \caption{Binarized density of the sphere checkerboard}
  \label{fig:binarized-density}
\end{figure}

\subsection{Torus}
\paragraph{Model.}
We provide details on the model used in the torus demonstration. The RCPM is composed of $6$ single-layer blocks of $200$ components, and the softing parameter is set to $0.5$. Adam's learning rate is set to $6e^{-4}$ and $\beta$ to $(0.9,0.99)$.

\paragraph{Data.}
The target density is of a form inspired by the target densities in \citet{rezende2020normalizing}):
\begin{align}
    p(\theta_1,\theta_2) &= \frac{1}{3}\sum_{i=1}^3p_i(\theta_1, \theta_2)\\
    p_i(\theta_1,\theta_2)&\propto \exp{[\cos(\theta_1 - a^1_i)+\cos(\theta_2 - a^2_i)]}
\end{align}
where $a_1 = [4.18, 6.7], a_2 = [4.18, 4.7], a_3 = [4.18, 2.7]$, and $\theta_1, \theta_2 \in [0,2\pi]$.

\subsection{Continental Drift}
\paragraph{Mapping estimation.}
We continue with details on the model used in the mapping estimation setting of the continental drift case study. The RCPM is composed of $7$ blocks containing each $3$ layers with $200$ components, and the softing coefficient is set to $0.2$. Adam's learning rate is set to $2e^{-3}$ and $\beta =(0.9,0.99)$.

\paragraph{Transport geodesics.}
We now discuss the transport geodesics setting. The RCPM is composed of a single block (hence allowing to recover the optimal transport geodesics) containing $3$ layers with $200$ components, and the softing coefficient is set to $\gamma = 0.2$. Adam's learning rate is set to $2e^{-3}$ and $\beta =(0.9,0.99)$.

\paragraph{Density estimation.}
Finally, we provide details on the model used in the density estimation setting. The RCPM is composed of $6$ single-layer blocks containing each $400$ components, and the softing coefficient is set to $6e^{-2}$. Adam's learning rate is set to $2e^{-3}$ and  $\beta =(0.9,0.99)$.

\paragraph{Data.} The earth densities are obtained by leveraging the code from \url{https://github.com/cgarciae/point-cloud-mnist-2D} to turn Mollweide earth images into spherical point clouds, converting to Euclidean coordinates, and applying kernel density estimation to such point clouds both for visualization, and to get log-probabilities when they are required (e.g., in the mapping estimation setting, where access to log-probabilities from the base -- old earth -- is needed to train the model).

\end{document}